\documentclass[12pt]{article}

\usepackage[utf8]{inputenc}
\usepackage[T1]{fontenc}
\usepackage{amsmath, amssymb} 
\usepackage{graphicx} 
\usepackage{url} 
\usepackage{natbib} 

\usepackage{bbm}
\usepackage{bm}
\usepackage{multirow}

\usepackage[font=small, labelfont=bf]{caption}

\usepackage{amsthm}
\newtheorem{theorem}{Theorem}[section]

\newtheorem{corollary}{Corollary}[theorem]

\usepackage{setspace}
\providecommand{\keywords}[1]
{
  \small	
  \textbf{\textit{Keywords---}} #1
} 

\newcommand{\authorname}{Francisco~J.~P{\'e}rez-Reche}
\newcommand{\authoremail}{fperez-reche@abdn.ac.uk}
\newcommand{\authoraddress}{School of Natural and Computing Sciences, University of Aberdeen, Fraser Noble Building, AB24 3UE, United Kingdom}

\makeatletter
\renewcommand{\@maketitle}{
    \begin{center}
        \vskip 1em
        {\Large \bfseries \@title \par}
        \vskip 1.5em
        {\Large \authorname \par}
        \vskip 0.5em
        {\em \authoraddress \par}
        \vskip 0.5em
        {\ttfamily \authoremail \par}
        \vskip 2em
    \end{center}
}
\makeatother

\title{The elbow statistic: Multiscale clustering statistical significance}

\begin{document}

\maketitle

\begin{abstract}
Selecting the number of clusters remains a fundamental challenge in unsupervised learning. Existing approaches typically focus on identifying a single ``optimal'' partition, often overlooking statistically meaningful structure present across multiple resolutions. 
We introduce ElbowSig, a general inferential framework for assessing clustering structure over a range of resolutions. The method formalizes the elbow heuristic by defining a normalized discrete curvature statistic based on the sequence of within-cluster heterogeneity values, and evaluates its significance relative to a null distribution of unstructured data. This yields hypothesis tests across resolutions, enabling simultaneous inference at multiple clustering scales.
We derive the asymptotic behavior of the null statistic in both large-sample and high-dimensional regimes, characterizing its limiting form and variability. Because it depends only on the heterogeneity sequence, ElbowSig is compatible with a wide range of clustering algorithms, including hard, fuzzy, and model-based methods.
Experiments on synthetic and real datasets show that the procedure controls Type-I error under unstructured data while providing power to detect multiscale organization, revealing structure that is often missed by single-resolution selection criteria.
\end{abstract}

\keywords{Cluster number selection, Elbow statistic, Multiscale inference, Non-parametric testing, Unsupervised learning}

\section{Introduction}

Clustering partitions observations into groups of similar points and is central to statistics and unsupervised learning, with applications across many domains. Common approaches include $k$-means, hierarchical methods, and density-based algorithms \citep{aggarwal_data_2014,Gan-Ma-Wu_DataClustering_Book2007, Rodriguez2019}, all of which require selecting an appropriate number of clusters to characterize the underlying structure.

For a fixed clustering algorithm, many criteria have been proposed to select the number of clusters \citep{Milligan1985,Gordon1999Classification}. Widely used examples include the Davies--Bouldin \citep{Davies1979}, Calinski--Harabasz \citep{Calinski-Harabasz}, and silhouette indices \citep{Rousseeuw1987}, which target a single optimal value $\hat{k}$. The elbow method offers an alternative approach: instead of maximizing a score, it identifies the number of clusters $\hat{k}$ where the within-cluster heterogeneity $H_k$ shows the strongest slope change (i.e. seeks an elbow in the $H_k$ versus $k$ curve) \citep{Sugar2003,Willmott_MachineLearning2009}. Despite its popularity, elbow detection is often based on visual inspection and therefore lacks a formal inferential interpretation. 

More broadly, many procedures to identify the number of clusters in a dataset tend to return $\hat{k}>1$ even when the data are effectively unstructured. The gap statistic addresses this by comparing observed heterogeneity reduction with that from null reference datasets \citep{tibshirani_estimating_2001}, allowing selection of $k\ge 1$, including $k=1$.

Other methods provide formal hypothesis tests for clustering structure, but they typically focus on specific decision problems, such as testing a single split or selecting a single optimal number of clusters. For example, \citet{Maitra2012} propose a bootstrap-based test to assess whether a partition into $k^*$ clusters is stronger than expected under a null model consisting of $k<k^*$ spherical or ellipsoidal clusters. SigClust \citep{Liu2008} provides another example, testing whether a two-cluster split is stronger than expected under a single Gaussian null; extensions allow for broader classes of unimodal null distributions \citep{Shen2024}. Although these approaches provide rigorous inferential tools, they are not designed to quantify statistical evidence across a sequence of resolutions, nor to compare the relative significance of multiple values of $k$ within a unified framework.

While these approaches introduce a statistical notion of significance into clustering, they typically focus on evaluating a single partition or testing a specific split. As a result, they do not provide a mechanism for assessing how statistical evidence for clustering structure evolves across different resolutions. This is a significant limitation, as many datasets exhibit meaningful organization at multiple levels of granularity, rather than admitting a single, well-defined number of clusters.

Methods such as pvClust \citep{Suzuki2006} and SHC \citep{Kimes2017} partially address this issue by assessing significance within hierarchical clustering. However, these approaches remain tied to specific clustering paradigms and do not offer a general framework for evaluating the statistical significance of arbitrary partitions across resolutions.


We introduce ElbowSig, a general inferential framework for assessing statistical evidence of clustering structure across multiple resolutions. Rather than selecting a single optimal number of clusters, ElbowSig quantifies the significance of curvature changes in the heterogeneity sequence $H_k$, enabling simultaneous inference across all values of $k$. 
Because ElbowSig depends only on the heterogeneity sequence, it is compatible with a wide range of clustering algorithms, enabling a unified inferential treatment of multiscale clustering structure independent of the underlying partitioning method.

The rest of the article is organized as follows. Section~\ref{sec:ElbowStatistic} introduces the elbow statistic and its interpretation as a multiscale signature of structural transitions. Section~\ref{sec:Baseline_Unstructured} derives baseline reference behavior for unstructured data in large-sample and large-dimensional regimes. We then present the ElbowSig computational framework and associated hypothesis-testing procedure in Section~\ref{sec:HypothesisTesting} , followed by applications to synthetic clustered data (Section~\ref{sec:SyntheticClusteredData}), random unstructured synthetic data (Section~\ref{sec:UnstructuredData}), and real datasets (Section~\ref{sec:RealData}).

\section{The elbow statistic}
\label{sec:ElbowStatistic}
Let $\mathcal{X} = \{x_i\}_{i=1}^N$ denote a dataset of $N$ points in $D$ dimensions.  
For a given number of clusters $k$, a clustering algorithm produces a partition 
$\{C_j\}_{j=1}^k$ (hard or fuzzy) together with $M$ representative parameters 
$\bm{\theta} = \{\theta_j\}_{j=1}^M$.  To each partition, we associate a scalar measure of 
within-cluster heterogeneity $H_k$ quantifying the adequacy of the partition. The heterogeneity $H_k$ is a non-negative and non-increasing function of the number of clusters which can be defined as
\begin{equation}
\label{eq:Hk_definition}
  H_k = \sum_{i=1}^N h_k(x_i, \bm{\theta})~,
\end{equation}
where $h_k(x_i, \bm{\theta})$ is a non-negative function that quantifies the contribution of point $x_i$ to the overall heterogeneity of the partition. 
Table~\ref{Table1} presents representative choices of $h_k$ that are appropriate for the clustering methods considered in this work: agglomerative clustering, k-means, fuzzy c-means (FCM), and Gaussian mixture models (GMMs). More generally, ElbowSig is not tied to these specific forms; it applies to any heterogeneity function $h_k(x_i,\bm{\theta})$ that induces a well-defined, non-increasing heterogeneity sequence $H_k$.

Empirically, datasets with clustered structure display a marked change in the slope of $H_k$ as $k$ increases: a region of rapid decay for $k < k^*$ followed by a much slower decay beyond $k^*$.
The inflection (or ``elbow'') at $k^*$ signals the transition between two regimes:
\begin{itemize}
  \item[(i)] for $k \le k^*$, adding new clusters substantially reduces
  heterogeneity by separating genuinely distinct groups in the data;
  \item[(ii)] for $k > k^*$, additional clusters mainly subdivide already cohesive
  regions, yielding diminishing returns in $H_k$.
\end{itemize}

\begin{table}[t]
\centering
\caption{
Point-wise heterogeneity measures $h_k(x_i;\bm{\theta})$ for the clustering models considered in this paper. For hard clustering (agglomerative and k-means) and FCM,  $\mu_j$ denotes the centroid of cluster $C_j$. For FCM, $w_{ij} \in [0,1]$ are the membership weights satisfying $\sum_{j=1}^k w_{ij} = 1$, $m>1$ is the fuzzifier exponent. 
For GMM, each component has mixing weight $\pi_j$ ($\sum_{j=1}^k \pi_j = 1$), mean $\mu_j$, and covariance $\Sigma_j$; $\phi(x_i \mid \mu_j, \Sigma_j)$ denotes the corresponding Gaussian density. 
\label{Table1}}
\vspace{0.7em}

\small
\begin{tabular}{|l|l|l|}
\hline
\textbf{Clustering method}
& \textbf{Point-wise heterogeneity $h_k(x_i;\bm{\theta})$}
& \textbf{Parameters $\bm{\theta}$}
\\ \hline

\parbox{4.5cm}{
Hard clustering \\[4pt]
(agglomerative and k-means)
}
&
\parbox{6cm}{
Inertia: \\[4pt]
$
\min_{1 \le j \le k}
\|x_i - \mu_j\|^2
$
}
&
\parbox{3cm}{
$\displaystyle
\mu_j
$
}
\\ \hline

Fuzzy $c$--means  (FCM)
&
\parbox{6cm}{
Weighted inertia: \\[4pt]
$\displaystyle
\sum_{j=1}^k
w_{ij}^m\,
\|x_i - \mu_j\|^2
$
}
&
\parbox{3cm}{
$
\displaystyle
(\mu_j,m,\{w_{ij}\})
$
}
\\ \hline
Gaussian mixture model (GMM)
&
\parbox{6cm}{
Negative log-likelihood: \\[4pt]
$\displaystyle
- \ln\!\Big(
\sum_{j=1}^k
\pi_j\,\phi(x_i\!\mid\!\mu_j,\Sigma_j)
\Big)
$
}
&
\parbox{3cm}{
$\displaystyle
(\pi_j,\mu_j,\Sigma_j)
$
}
\\ \hline

\end{tabular}
\end{table}

For data with hierarchical structure where groups themselves may contain finer subgroups, $H_k$ may exhibit several elbows. Indeed, as $k$ increases, the optimal partition first allocates clusters to capture coarse components, producing a first elbow at $k_1^*$.  
Further increases in $k$ resolve substructure within these components, producing secondary elbows $(k_2^*, k_3^*, \dots)$.  
Each elbow thus corresponds to a newly resolved organizational scale. This is analogous to the phase transitions in clustering identified by \citet{Rose1990} and \citet{Rose1998}.

To quantify the local change in slope, we define the \emph{elbow statistic},
\begin{equation}
\label{eq:deltak_definition}
  \delta_k = -\frac{\Delta^2 H_k }{\Delta H_{k}}~,
\end{equation}
where 
$\Delta H_k = H_{k+1} - H_k$ and $\Delta^2 H_k = \Delta H_k - \Delta H_{k-1}$
are the first and second discrete differences of $H_k$, respectively.

The elbow statistic $\delta_k$ is a normalized discrete analogue of the
\emph{second derivative} of $H_k$.
An elbow in $H_k$ corresponds to a point of maximum curvature, manifesting as a local maximum of $\delta_k$.
Peaks in $\delta_k$ therefore indicate scales at which the rate of heterogeneity reduction undergoes an abrupt change,
signalling the emergence of new structure in the data.

The sequence $\{\delta_k\}$ can be interpreted as a quantitative signature of structural transitions in the data.  
Plotting $\delta_k$ as a function of $k$ offers a direct analogue to the classical
``elbow plot'' of $H_k$ versus $k$, but with enhanced interpretability:  
peaks in $\delta_k$ correspond to local maxima in the curvature of $H_k$
and thus mark scales where the rate of heterogeneity reduction changes most abruptly.
In practice, these peaks reveal the emergence of new substructures or cluster
levels within the data hierarchy.

However, not every local maximum in $\delta_k$ necessarily reflects meaningful
organization: random spatial irregularities or finite-sample fluctuations
can produce spurious peaks even in unstructured data.  
To distinguish genuine structural transitions from such random effects,
it is essential to compare the observed $\delta_k$ sequence against
a reference baseline $\delta_k^{(r)}$ derived from unstructured data
(e.g.\ uniformly distributed samples with the same support and covariance as the original data).
The next sections motivate this comparison by deriving an asymptotic
baseline for random data and introduce a practical numerical
procedure for evaluating deviations from this baseline.

\section{Baseline reference $\delta_k^{(r)}$ for unstructured data}
\label{sec:Baseline_Unstructured}

In this section, we study the dependence of the elbow statistic $\delta_k^{(r)}$ for unstructured data on the sample size $N$, the number of clusters, $k$, and data dimensionality, $D$. 
 More specifically, asymptotic results are presented in two complementary limits: large-sample ($N \rightarrow \infty$ with fixed $D$ and $k$) and large dimensionality ($D \rightarrow \infty$ with fixed $N$ and $k$).
The resulting characterizations of $\delta_k^{(r)}$ provide the theoretical foundation for the ElbowSig framework introduced in Section~\ref{sec:HypothesisTesting}, and underpin its application to clustered data in Sections~\ref{sec:SyntheticClusteredData}--\ref{sec:RealData}.

\subsection{Large-sample asymptotics}
\label{sec:AsymptoticBaseline_N}

\begin{theorem}
\label{thm:AsymptoticBaseline_N}
Let the heterogeneity $H_k$ be defined as in Eq.~\eqref{eq:Hk_definition}, and suppose that the data are drawn i.i.d. from a distribution $f$ on $\mathbb{R}^D$ with finite second moment. For fixed parameters $\bm{\theta}$ and fixed $k$, define the elbow statistic $\delta_k^{(r)}$ based on a reference sample of size $N$.

Then, as $N \to \infty$,
\begin{equation}
\label{eq:Edeltak_Nlimit}
\mathbb{E}[\delta_k^{(r)}]
=
-\frac{\Delta^2 A_k}{\Delta A_k}
+
O(N^{-1}),
\qquad
\mathrm{Var}(\delta_k^{(r)}) = O(N^{-1}),
\end{equation}
provided $\Delta A_k \neq 0$, where
\begin{equation}
\label{eq:Ak}
A_k = \int_{\Omega} h_k(x,\bm{\theta}) f(x)\,dx
\end{equation}
denotes the expected point-wise heterogeneity.
\end{theorem}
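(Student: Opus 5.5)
The plan is to view $\delta_k^{(r)}$ as a smooth function of a few sample means and apply the delta method. With $\bm{\theta}$ fixed, set $Y_i^{(j)} = h_j(x_i,\bm{\theta})$ for $j\in\{k-1,k,k+1\}$. Each $Y^{(j)}$ is a function of an i.i.d. sample, so $H_j = \sum_i Y_i^{(j)}$ and $\bar H_j := H_j/N$ is a sample mean with $\mathbb{E}[\bar H_j] = A_j$. The elbow statistic is invariant under rescaling $H\mapsto H/N$, so
\begin{equation*}
\delta_k^{(r)} = g(\bar H_{k-1},\bar H_k,\bar H_{k+1}),
\qquad
g(a,b,c) = -\frac{(c-b)-(b-a)}{c-b}.
\end{equation*}
This $g$ is a rational function and is smooth wherever $c\neq b$. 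Since $\Delta A_k = A_{k+1}-A_k\neq 0$, $g$ is smooth in a neighbourhood of $\bm{A}=(A_{k-1},A_k,A_{k+1})$, and $g(\bm{A}) = -\Delta^2A_k/\Delta A_k$.

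The steps are as follows.

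\emph{Moments of $h_j$.} For the inertia-type choices in Table~\ref{Table1}, $h_j(x,\bm{\theta})\le C(1+\|x\|^2)$. The finite-second-moment hypothesis then gives $\mathbb{E}|Y^{(j)}|<\infty$, so $A_j$ is finite. For the covariance statement I will also need $\mathrm{Var}(Y^{(j)})<\infty$, which strictly requires a fourth moment of $f$; I will state this as an implicit regularity condition, and the same applies to the GMM negative log-likelihood. Under it, $\mathrm{Cov}(\bar H_i,\bar H_j) = \Sigma_{ij}/N$.

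\emph{Taylor expansion.} Expand $g$ to second order around $\bm{A}$:
\begin{equation*}
g(\bar{\bm H}) = g(\bm{A}) + \nabla g\cdot(\bar{\bm H}-\bm{A}) + \tfrac12(\bar{\bm H}-\bm{A})^{\top}\nabla^2 g(\bm{A})(\bar{\bm H}-\bm{A}) + R_N.
\end{equation*}
The linear term has mean zero. The quadratic term has mean $\tfrac{1}{2N}\mathrm{tr}(\nabla^2 g\,\Sigma)=O(N^{-1})$. For the variance, the leading term is $\nabla g^{\top}\Sigma\nabla g/N=O(N^{-1})$, and the remaining terms are of smaller order.

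\emph{The remainder (main obstacle).} Because $g$ has a pole at $c=b$, $\mathbb{E}[\delta_k^{(r)}]$ need not even exist if $\bar H_{k+1}-\bar H_k$ has positive density near $0$. I would split on the event $E_N=\{|\bar H_{k+1}-\bar H_k - \Delta A_k|\le |\Delta A_k|/2\}$. On $E_N$, $g$ has uniformly bounded third derivatives, so the Taylor remainder is bounded by $C\|\bar{\bm H}-\bm{A}\|^3$, whose expectation is $O(N^{-3/2})$ under finite third or fourth moments. Chebyshev, or Hoeffding when $h$ is bounded on bounded support such as the uniform reference, gives $P(E_N^c)=O(N^{-2})$ or smaller. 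To control the contribution of $E_N^c$, I would either truncate, interpreting the moments as those of the delta-method expansion (the standard asymptotic-moment convention), or note that for the fixed-$\bm\theta$ inertia with distinct nested centroids the difference $H_{k+1}-H_k$ is a sum of nonpositive terms bounded away from zero except with exponentially small probability. I would adopt the truncated or asymptotic-moment interpretation explicitly in the proof. Combining the three pieces yields Eq.~\eqref{eq:Edeltak_Nlimit}.
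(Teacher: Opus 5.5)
Your proposal is correct and follows essentially the paper's argument: treat $H_{k-1},H_k,H_{k+1}$ as i.i.d.\ sums with means $NA_j$ and $O_p(N^{1/2})$ fluctuations, then apply the delta method to the ratio defining $\delta_k^{(r)}$. Your extra care makes explicit what the paper's short proof leaves implicit: you note that $\mathrm{Var}(h_k)<\infty$ actually needs a fourth moment of $f$, you expand jointly in the three heterogeneities, and you adopt an explicit asymptotic-moment/truncation convention for the pole at $\Delta H_k=0$. Two small fixes are needed: second-moment Chebyshev gives only $P(E_N^c)=O(N^{-1})$, not $O(N^{-2})$, though this still suffices after truncation; and since $\partial_c^3 g=-6(b-a)/(c-b)^4$ is not bounded on $E_N$, your remainder bound needs an extra factor $1+\|\bar{\bm H}-\bm A\|$.
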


\begin{proof}
Since, for fixed $\bm{\theta}$, the reference data heterogeneity $H_k^{(r)}$ can be written as a sum of i.i.d. point-wise heterogeneity functions (Eq.~\eqref{eq:Hk_definition}), the central limit theorem implies 
\citep{Grimmett2001}
\begin{equation}
H_k = N A_k + O_p(N^{1/2}),
\end{equation}
where $O_p$ denotes stochastic order and $A_k$ is given by Eq.~\eqref{eq:Ak}.

Consequently,
\begin{equation}
\label{eq:E_Var_Hk_LargeN}
\mathbb{E}[H_k] = N A_k,
\qquad
\mathrm{Var}(H_k) = O(N),    
\end{equation}
so that the empirical heterogeneity converges to its population counterpart 
at rate $N^{-1/2}$.

Applying the delta method \citep{VerHoef2012} to the smooth mapping 
$(x,y)\mapsto -x/y$ defining the elbow statistic (Eq.~\ref{eq:deltak_definition}) yields Eq.~\eqref{eq:Edeltak_Nlimit} and finalises the proof.
\end{proof}

Theorem~\ref{thm:AsymptoticBaseline_N} shows that the elbow statistic concentrates around a deterministic population functional determined by the sequence
$\{A_k\}$, with fluctuations of order $N^{-1/2}$. 

\begin{corollary}
For unstructured reference data with $f(x)=|\Omega|^{-1}$, the expected value of the elbow statistic satifies 
\begin{equation}
  \label{eq:delta_k_asymptotic_N}
\mathbb{E}[\delta_k^{(r)}]
\sim
\frac{1 + 2/D}{k},
\end{equation}
in the large-sample regime $N \to \infty$ followed by $k \rightarrow \infty$, with fixed dimension $D$.
\end{corollary}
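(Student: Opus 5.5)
By Theorem~\ref{thm:AsymptoticBaseline_N}, in the limit $N\to\infty$ the expected elbow statistic reduces to the deterministic functional $-\Delta^2 A_k/\Delta A_k$. The whole task is therefore to find how $A_k$ behaves for large $k$ when $f$ is uniform on $\Omega$ and $h_k$ is the inertia $\min_j\|x-\mu_j\|^2$ with optimally placed centroids. I will use the classical high-resolution quantization result of Zador and Gersho. For a bounded domain $\Omega\subset\mathbb{R}^D$ with uniform density, the optimal $k$-point mean-squared quantization error satisfies
\begin{equation*}
A_k = C_D\,|\Omega|^{2/D}\,k^{-2/D}\,\bigl(1+o(1)\bigr), \qquad k\to\infty,
\end{equation*}
where $C_D$ is a dimension-dependent constant. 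The heuristic behind it is simple: each of the $k$ cells has volume about $|\Omega|/k$, so its squared linear size scales as $(|\Omega|/k)^{2/D}$. For fuzzy or GMM heterogeneities I would argue the same $k^{-2/D}$ scaling by the same cell-volume argument, but I state the corollary primarily for the inertia case.

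Next I substitute the power law $A_k\approx c\,k^{-\alpha}$ with $\alpha=2/D$ and expand the discrete differences for large $k$. The first difference is
\begin{equation*}
\Delta A_k = c\bigl[(k+1)^{-\alpha}-k^{-\alpha}\bigr] = -c\alpha k^{-\alpha-1}\Bigl(1-\tfrac{\alpha+1}{2k}+O(k^{-2})\Bigr).
\end{equation*}
The second difference is the central difference $\Delta A_k-\Delta A_{k-1}$. Expanding around $k$ gives
\begin{equation*}
\Delta^2 A_k = c\alpha(\alpha+1)k^{-\alpha-2}\bigl(1+O(k^{-1})\bigr).
\end{equation*}
The ratio is then
\begin{equation*}
-\frac{\Delta^2A_k}{\Delta A_k}=\frac{\alpha+1}{k}\bigl(1+O(k^{-1})\bigr)=\frac{1+2/D}{k}\bigl(1+O(k^{-1})\bigr),
\end{equation*}
which is the claimed formula~\eqref{eq:delta_k_asymptotic_N}. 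The order of limits in the statement, $N\to\infty$ first and then $k\to\infty$, is exactly what lets me first invoke the theorem, which holds at fixed $k$, and only afterwards expand in $k$.

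The main obstacle is justifying that the Zador asymptotic can be differentiated twice in the discrete sense. Knowing $A_k=ck^{-\alpha}(1+o(1))$ says nothing about $\Delta A_k$ or $\Delta^2A_k$, because the $o(1)$ correction could oscillate. My first approach is to strengthen the assumption to a correction of order $O(k^{-1})$ with a smooth expansion in $k$. Such an expansion holds heuristically, since boundary cells contribute relative corrections of order $k^{-1/D}$. Those boundary terms would spoil the differences unless they too follow smooth power laws in $k$, so I would state this as a regularity assumption.

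Alternatively, I would restrict to domains such as the torus, or a cube where $k$ runs over values admitting lattice-like optimal configurations, where $A_k$ is exactly proportional to $k^{-2/D}$. In that setting the computation above is rigorous, and the symbol $\sim$ in the statement is read as leading-order asymptotic equivalence.
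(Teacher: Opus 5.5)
Your argument is the same as the paper's. Theorem~\ref{thm:AsymptoticBaseline_N} reduces the mean to $-\Delta^2A_k/\Delta A_k$, high-rate quantization gives $A_k\simeq C_0+C_1k^{-2/D}$ (the paper's additive $C_0$ cancels in the differences), and substitution yields $(1+2/D)/k$. Your caveat about differencing an asymptotic law is an assumption the paper makes silently.

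Rest the proof on that smoothness assumption, stated precisely as: the remainder $R_k=A_k-C_0-C_1k^{-2/D}$ satisfies $\Delta R_k=o(k^{-2/D-1})$ and $\Delta^2R_k=o(k^{-2/D-2})$. Drop the torus/lattice fallback, for two reasons. First, exact proportionality $A_k\propto k^{-2/D}$ over consecutive $k$ holds only for $D=1$. Second, restricting $k$ to lattice-admitting values is incompatible with differences taken at $k-1,k,k+1$.
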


\begin{proof}
For $f(x)=|\Omega|^{-1}$, the clustering problem can be described as an optimal vector quantization problem. Under standard regularity conditions, high-rate quantization theory \citep{Graf2000} implies
\begin{equation}
A_k \simeq C_0 + C_1 k^{-2/D},
\end{equation}
for large $k$, where $C_0$ and $C_1$ are independent of $k$. Substituting this asymptotic form into Eq.~\eqref{eq:Edeltak_Nlimit} yields Eq.~\eqref{eq:delta_k_asymptotic_N}.
\end{proof}

Hence, under the null model corresponding to unstructured data, the expected baseline elbow decreases 
smoothly as $k^{-1}$, with a prefactor depending only on the 
dimension $D$. Systematic deviations of the observed $\delta_k$ from this smooth
reference behavior indicate the presence of genuine cluster structure.

\subsection{Asymptotic behavior for high dimensionality}
\label{sec:AsymptoticBaseline_D}

We now consider the complementary asymptotic regime in which the
dimension $D \to \infty$ for fixed $N$ and $k$.

\begin{theorem}
\label{thm:AsymptoticBaseline_D}
Let $x_1,\dots,x_N \in \mathbb{R}^D$ be i.i.d. from an isotropic sub-Gaussian distribution~\citep{Vershynin2018} with
\[
\mathbb{E}[x_i]=0,
\qquad
\mathbb{E}[x_i x_i^\top]=\sigma^2 I_D.
\]

Let the heterogeneity $H_k$ be defined as in Eq.~\eqref{eq:Hk_definition}, and assume that, for fixed parameters $\bm{\theta}$, the pointwise heterogeneity $h_k(x_i,\bm{\theta})$ is a measurable function of $x_i$ that depends on $x_i$ through quadratic forms (or smooth functions thereof), so that it inherits sub-Gaussian concentration at scale $D^{1/2}$.

Then, for fixed $k$ and $\bm{\theta}$, the elbow statistic satisfies, as $D \to \infty$,
\begin{equation}
\mathbb{E}[\delta_k^{(r)}]
=
-\frac{\Delta^2 B_k}{\Delta B_k}
+
O(D^{-1/2}),
\qquad
\mathrm{Var}(\delta_k^{(r)}) = O(D^{-1}),
\end{equation}
provided $\Delta B_k \neq 0$, where $B_k$ is defined through
\begin{equation}
\label{sec:Hk_highD}
H_k = D B_k + O_p(D^{1/2}).
\end{equation}
\end{theorem}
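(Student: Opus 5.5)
The plan mirrors the proof of Theorem~\ref{thm:AsymptoticBaseline_N}, with the dimension $D$ taking the role of the sample size $N$. Throughout, $\bm{\theta}$ is held fixed, so each $H_k$ is a deterministic measurable function of the i.i.d.\ sample $x_1,\dots,x_N$. The first step is a concentration statement for $H_{k-1},H_k,H_{k+1}$ at scale $D^{1/2}$. For the quadratic forms in Table~\ref{Table1}, e.g.\ $\|x_i-\mu_j\|^2$, expand
\begin{equation*}
\|x_i-\mu_j\|^2=\|x_i\|^2-2\mu_j^\top x_i+\|\mu_j\|^2 .
\end{equation*}
Since $x_i$ is isotropic sub-Gaussian, $\|x_i\|^2-\sigma^2D$ is a sum of $D$ centred sub-exponential terms, so the Hanson--Wright/Bernstein inequality \citep{Vershynin2018} gives fluctuations of order $D^{1/2}$. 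The linear term $\mu_j^\top x_i$ is sub-Gaussian with scale $\|\mu_j\|$, which is $O(D^{1/2})$ whenever the centres have $O(1)$ coordinates.

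The hypothesis that $h_k$ depends on $x_i$ through such quadratic forms, or smooth functions of them, then transfers this concentration to $h_k(x_i,\bm{\theta})$. For $\min_j$ this uses the fact that the minimum is $1$-Lipschitz in the vector of forms. For the GMM log-likelihood it uses Lipschitz continuity on the high-probability event where the forms are near their means. Summing over the fixed, finite $N$ points, we define
\begin{equation*}
B_k=D^{-1}\,\mathbb{E}[H_k],
\end{equation*}
and obtain $H_k=DB_k+O_p(D^{1/2})$, which matches Eq.~\eqref{sec:Hk_highD}. The sub-Gaussian tails also give $\mathrm{Var}(H_k)=O(D)$ and uniformly bounded higher moments of $D^{-1/2}(H_k-DB_k)$. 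We need these moments, and not just the stochastic order, to control expectations later.

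The second step writes $\delta_k^{(r)}=g(U,V)$ with $g(u,v)=-u/v$. Here $U=D^{-1}\Delta^2H_k$ and $V=D^{-1}\Delta H_k$, so the common factor $D$ cancels in the ratio. By the first step, $U=\Delta^2B_k+O_p(D^{-1/2})$ and $V=\Delta B_k+O_p(D^{-1/2})$. We assume $B_k$ has a finite limit, or at least that $\Delta B_k$ stays bounded away from zero; this is implicit in ``provided $\Delta B_k\neq0$''. Then $g$ is smooth near $(\Delta^2B_k,\Delta B_k)$, and a first-order Taylor (delta-method) expansion gives
\begin{equation*}
\delta_k^{(r)}=-\frac{\Delta^2B_k}{\Delta B_k}+\nabla g\cdot(U-\mathbb{E}U,\,V-\mathbb{E}V)+R ,
\end{equation*}
with remainder $R=O_p(D^{-1})$. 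The linear term has zero mean and variance $O(D^{-1})$. This yields $\mathrm{Var}(\delta_k^{(r)})=O(D^{-1})$ and a bias of at most $O(D^{-1})$, which is within the stated $O(D^{-1/2})$.

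The main obstacle is turning these stochastic expansions into statements about $\mathbb{E}$ and $\mathrm{Var}$, because the ratio $-u/v$ is unbounded near $v=0$. I would split on the event
\begin{equation*}
E_D=\bigl\{|V-\Delta B_k|\le \tfrac12|\Delta B_k|\bigr\}.
\end{equation*}
By the sub-Gaussian concentration from the first step, $\mathbb{P}(E_D^c)\le 2e^{-cD}$. On $E_D$ the Taylor remainder is bounded by $C(|U-\mathbb{E}U|^2+|V-\mathbb{E}V|^2)$, whose expectation is $O(D^{-1})$. On $E_D^c$ I would either assume a mild moment bound on $\delta_k^{(r)}$ or read the theorem as stating moments of the statistic truncated to $E_D$; the exponentially small probability then makes that contribution negligible. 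Making this truncation rigorous without extra assumptions is where I expect the argument to be delicate. The conservative $O(D^{-1/2})$ bias in the statement leaves room to absorb a cruder, Cauchy--Schwarz-type estimate of the off-event contribution.
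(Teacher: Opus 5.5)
Your proposal is sound. It shares the paper's skeleton: concentration of quadratic forms at scale $D^{1/2}$, hence $H_k = DB_k + O_p(D^{1/2})$, then the delta method for $(u,v)\mapsto -u/v$ once the common factor $D$ cancels. It differs from the paper in how the centroids enter and in how carefully moments are treated.

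\emph{Centroids.} You take $\bm{\theta}$ literally fixed and control the cross term $\mu_j^\top x_i$ as a sub-Gaussian linear form of scale $\|\mu_j\| = O(D^{1/2})$. The paper's proof instead treats the $\mu_j$ as finite linear combinations of the $x_i$ that inherit their concentration. The paper's reading is what its discussion after the theorem uses for hard clustering: $B_k \propto N-k$ arises from fitted centroids, and a fixed-$\bm{\theta}$ argument does not produce it. You could recover it with a union bound over the finitely many partitions $P$ of the $N$ points. The within-cluster sum of squares $W(P)$ of any $k$-partition satisfies $\mathbb{E}[W(P)] = \sigma^2 D(N-k)$ under the null.

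\emph{What your route adds.} It gives two things the paper's one-line appeal to the delta method does not.
First, the canonical choice $B_k = D^{-1}\mathbb{E}[H_k]$ centres the linear term exactly and yields bias $O(D^{-1})$. Eq.~\eqref{sec:Hk_highD} only fixes $B_k$ up to $O(D^{-1/2})$, which is consistent with the paper's weaker error term.
Second, the split on $E_D$ addresses the fact that the delta method gives distributional approximations, not moments of a ratio.
The extra moment assumption you flag on $E_D^c$ is genuinely needed. A denominator $\Delta H_k$ with positive density at $0$ typically makes $\mathbb{E}|\delta_k^{(r)}|$ infinite at every finite $D$. This is a limitation of the statement, not of your argument.

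\emph{A caveat shared by both proofs.} The Bernstein/Hanson--Wright step for $\|x_i\|^2$ tacitly needs independent coordinates. Take $x = ru$ with $u$ uniform on the sphere of radius $\sqrt{D}$ and $r$ an independent, bounded, non-constant scalar with $\mathbb{E}[r^2] = \sigma^2$. Then $x$ is isotropic sub-Gaussian, yet $\|x\|^2 = r^2 D$ fluctuates at order $D$. So what actually carries this step, in your proof and the paper's, is the theorem's hypothesis that $h_k$ ``inherits sub-Gaussian concentration at scale $D^{1/2}$''.
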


\begin{proof}
Under the stated isotropic sub-Gaussian assumption, standard results on concentration of quadratic forms~\citep{Vershynin2018} imply that
\[
\|x_i\|^2 = \sigma^2 D + O_p(\sqrt{D}),
\qquad D \to \infty.
\]

Since the centroid parameters $\mu_j$ are finite linear combinations of the vectors $x_i$ (with $N$ fixed), they inherit sub-Gaussian concentration in each coordinate. From this, it follows that forms of the type $\|x_i-\mu_j\|^2$ also concentrate at rate $O_p(\sqrt D)$.

As a consequence, the heterogeneity admits the expansion given by Eq.~\eqref{sec:Hk_highD} with a deterministic coefficient $B_k$ that depends on the specific definition of heterogeneity.

Applying the delta method to the smooth mapping defining $\delta_k^{(r)}$ (Eq.~\ref{eq:deltak_definition}) yields the stated expansion for the mean and variance.
\end{proof}

Theorem \ref{thm:AsymptoticBaseline_D} applies to clustering methods based on squared Euclidean distances (e.g., $k$-means, FCM, and Gaussian mixture models), for which the heterogeneity function inherits the concentration properties of quadratic forms under the sub-Gaussian assumption. Consequently, in the high-dimensional regime, the baseline elbow statistic associated with this class of heterogeneity functions concentrates around a deterministic limit, with variance of order $D^{-1}$.

The $D^{-1}$ decay of the variance holds for all the clustering methods considered here. Fig.~\ref{fig:Stats_vs_D}(b) demonstrates this numerically. In contrast, the asymptotic value of $\mathbb{E}[\delta_k^{(r)}]$ depends on the functional form of $B_k$ which, in turn, depends on the heterogeneity definition as follows:

\begin{itemize}
\item For hard-clustering inertia, $B_k = D \sigma^2 (N-k)$, so that $\Delta^2 B_k = 0$ and
$\mathbb{E}[\delta_k^{(r)}] = O(D^{-1})$. Thus, the baseline elbow vanishes asymptotically, as demonstrated numerically in Fig.~\ref{fig:Stats_vs_D}(a) for the agglomerative and k-means methods based on the hard-clustering inertia.

\item For FCM, analytical expressions can be derived in the regime of large $m$, where the membership weights $w_{ij}$ approach $k^{-1}$ \citep{Hathaway1988}. In this limit, $B_k = \sigma^2 N k^{1-m}$,  and the expected elbow statistic converges, as $D\rightarrow \infty$, to
\begin{equation}
\label{eq:delta_k_asymptotic_FCM}
\mathbb{E}[\delta_k^{(r)}] \xrightarrow[D\to\infty]{} \frac{k^{1-m}-(k-1)^{1-m}}{(k+1)^{1-m}-k^{1-m}}-1
\end{equation}

The numerical results in Fig.~\ref{fig:Stats_vs_D}(a) indicate that this asymptotic approximation is remarkably accurate even for $m=2$ and $D<10$. In particular, the dotted line corresponding to Eq.~\eqref{eq:delta_k_asymptotic_FCM} closely matches the simulated values (diamonds). Moreover, the very small variance observed in Fig.~\eqref{fig:Stats_vs_D}(b) shows that the baseline elbow statistic for FCM is tightly concentrated around the value predicted by Eq.~\eqref{eq:delta_k_asymptotic_FCM}, even at moderate dimensions.

\item For Gaussian mixture models,
$B_k$ exhibits a logarithmic dependence on $k$ which results in
\begin{equation}
\mathbb{E}[\delta_k^{(r)}] \xrightarrow[D\to\infty]{} -\frac{\ln (1-k^{-2})}{\ln (1+k^{-1})} \sim \frac{1}{k} \qquad (k \to \infty).
\end{equation}
While this predicts an asymptotic trend to a constant positive $k$-dependent value, the observed behavior in  Fig.~\ref{fig:Stats_vs_D}(a) is masked by significant fluctuations. Specifically, Fig.~\ref{fig:Stats_vs_D}(b) reveals a high variance in $\delta_k$ when GMMs are applied to finite simulated datasets. This variability likely stems from the non-convexity of the mixture likelihood surface, which is prone to multiple local maxima \citep{Murphy_BookMachineLearning,Chen2024}. As a result, maximum‑likelihood estimates  can differ substantially across independent datasets and random initializations, particularly in small samples.
\end{itemize}

\begin{figure}[!t]%
\centering
\includegraphics[width=0.8\linewidth]{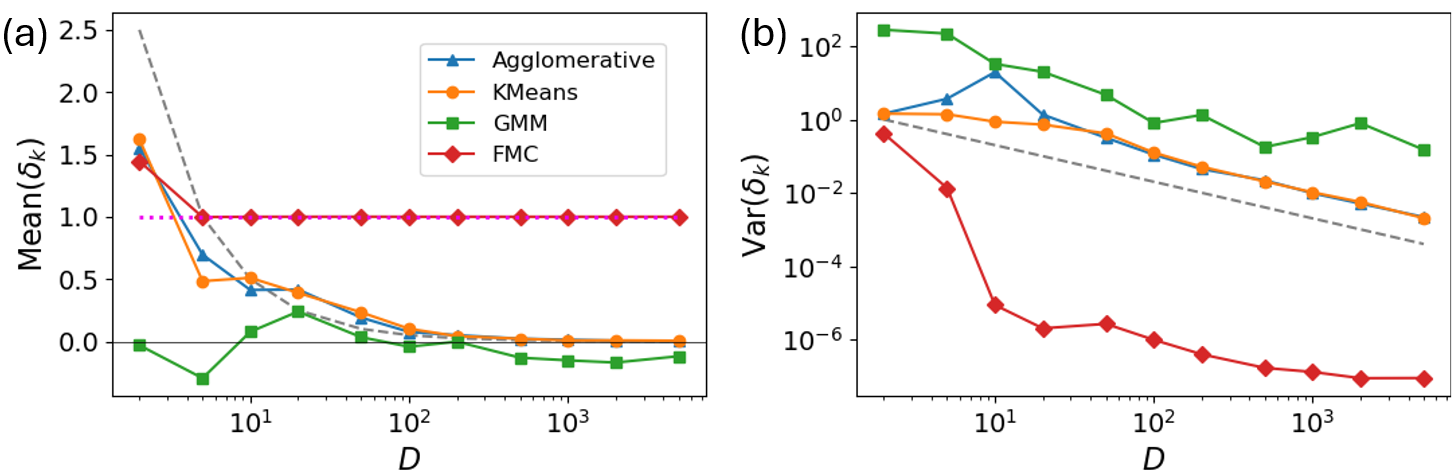}
\caption{Mean value and variance of the elbow statistic $\delta_k^{(r)}$ for unstructured data as a function of the dimension $D$ for agglomerative, k-means, GMM and FCM clustering methods. The expectation and variance are computed from $N_R = 500$ reference datasets of size $N=30$ uniformly distributed on $[0,1]^D$, assuming $k=3$ clusters. The dashed lines indicate the predicted asymptotic trends for large $D$. The dotted line in panel (a) indicates the asymptotic value of $\mathbb{E}[\delta_k^{(r)}]$ for FCM in the limit of large fuzzifier parameter $m$. 
\label{fig:Stats_vs_D}
}
\end{figure}

\section{The ElbowSig framework: Computation of the elbow statistic and clustering hypothesis testing}
\label{sec:HypothesisTesting}

The previous section shows that the baseline elbow statistic $\delta_k^{(r)}$ for unstructured data exhibits non-negligible fluctuations, which are particularly pronounced for small sample sizes $N$ and low $D$. Therefore, to properly assess whether an observed slope change $\delta_k^{\mathrm{data}}$ reflects genuine structure rather than a random fluctuation under the null hypothesis of unstructured data, one should take these fluctuations into account when comparing the observed elbow statistic with the baseline. ElbowSig implements a practical procedure for this comparison, which consists of four steps.

\paragraph{(1) Elbow statistic for the observed data.}
Given a clustering method and heterogeneity measure, we compute the sequence
$\{\delta_k^{\mathrm{data}}\}_{k=1}^{k_{\max}}$ for the observed dataset $\mathcal{X}$.

\paragraph{(2) Elbow statistics under the null hypothesis.}
To evaluate statistical significance, we generate $N_R$
reference datasets, $\{\mathcal{X}^{(r)}\}_{r=1}^{N_R}$, of the same size as $\mathcal{X}$ that contain no cluster structure.
For each reference dataset, we compute the baseline elbow statistic
$\{\delta_k^{(r)}\}_{k=1}^{k_{\max}}$, yielding an empirical null
distribution for every~$k$.

Two choices of reference generator are considered, following \cite{tibshirani_estimating_2001} proposal for the gap statistic:
\begin{itemize}
\item[(i)] \emph{Bounding-box uniformity}: features are sampled independently and uniformly over their observed ranges.
\item[(ii)] \emph{PCA-aligned uniformity}: reference points are generated uniformly in a PCA-aligned hyperrectangle and mapped back to the original coordinate system.
\end{itemize}

\paragraph{(3) Empirical p-values.}
For each scale $k$,we assess the extremity of the observed elbow statistic $\delta_k^{\mathrm{data}}$ relative to the null distribution. An empirical $p$-value is computed as:
\begin{equation}
p_k = 
\frac{1}{N_R}
\sum_{r=1}^{N_R}
\mathbbm{1}\!\left(\delta_k^{(r)} \ge \delta_k^{\mathrm{data}}\right).
\end{equation}

\paragraph{(4) Significance criteria.} 
To identify statistically meaningful structures while controlling for Type I errors (false positives), we employ two complementary assessment levels:
\begin{itemize}
\item[(i)] Individual scale significance criterion (\emph{``per-scale''} or \emph{``per-$k$''}): A partition into $k$ clusters is declared to contain statistically meaningful structure at level $q_1$ if $p_k < p_{\text{sig}}(q_1)$. The threshold $p_{\mathrm{sig}}(q_1)$ is calibrated to ensure
that, under the null hypothesis of unstructured data, the probability of (incorrectly)
declaring \emph{any} scale $k \in \{1,\dots,k_{\text{max}}\}$ as being significant is bounded by~$q_1$. We define this conservative, scale-independent threshold as $p_{\mathrm{sig}}(q_1) = \min_{1 \le k \le k_{\max}} \;p_{\mathrm{thr}}(k;q_1)$, where each $p_{\mathrm{thr}}(k;q_1)$ is a threshold derived to bound Type-I errors at scale $k$. This threshold is estimated non-parametrically through the subsampling procedure detailed in Appendix~\ref{app:psig}.

\item[(ii)] \emph{Global} false--discovery--rate (FDR)
controlled significance levels are obtained by applying the \citet{Benjamini1995}
procedure to the empirical $p$--values $\{p_k\}$.
This controls the expected proportion of spurious discoveries across
the entire range $k = 1,\dots,k_{\max}$ at a prespecified level $q_2$.
\end{itemize}

The two criteria address distinct inferential goals:
the per--scale (per-$k$) criterion provides conservative protection at each
resolution individually, while the FDR criterion accounts for multiple
comparisons across clustering scales.
The per--scale criterion is appropriate when the question is 
\emph{``Is there structure at this particular resolution?''}.
In contrast, global FDR control is relevant when the goal is 
\emph{``Which values of $k$ are significant overall after adjusting for 
the search across many scales?''}.

\section{Applications to synthetic clustered data}
\label{sec:SyntheticClusteredData}

To assess the ability of ElbowSig to detect clustering structure, we generated synthetic datasets in $D$ dimensions from mixtures of $M$ Gaussian components whose centres were placed uniformly at random in the hypercube $[-10,10]^D$. All components share a common standard deviation $\sigma_c$.  

As an illustrative example, Fig.~\ref{fig:2D_Example_Traditional}(a) shows a two-dimensional dataset with $M=3$ centres and $\sigma_c = 1$.
Agglomerative clustering with Ward's minimum variance linkage and Euclidean metric was applied to obtain candidate partitions. Standard cluster-number estimators give widely conflicting results in this setting: the maxima of the Calinski--Harabasz (CH), Davies--Bouldin (DB), and silhouette indices suggest $\hat{k}=9$, $\hat{k}=6$, and $\hat{k}=2$ clusters, respectively (Fig.~\ref{fig:2D_Example_Traditional}(b)-(c)).

\begin{figure}[!t]%
\centering
\includegraphics[scale=0.7]{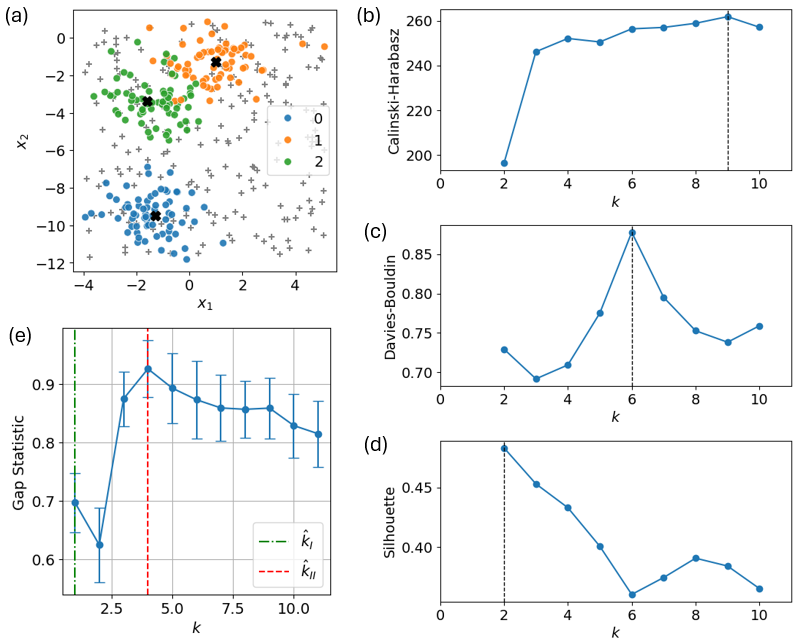}
\caption{
Clustering of two-dimensional data.
(a) Dataset with $N=200$ observations consisting of $M=3$ Gaussian components centred at the locations indicated by crosses, each with spread $\sigma_c = 1$. Solid circles show data points coloured according to their generating component.  Grey crosses show one realization of a bounding-box uniform reference dataset. Partitions with $k \in \{1,2,\dots,10\}$ were evaluated using five standard methods for estimating the number of clusters: (b) Calinski--Harabasz, (c) Davies--Bouldin, and (d) silhouette scores as functions of $k$. Vertical dashed lines indicate the value $\hat{k}$ at which each index attains its maximum. (e) Gap statistic as a function of $k$, with vertical lines marking the optimal number of clusters according to the Gap (I) and Gap (II) criteria described in the text. Error bars indicate the standard error estimate $s_k$  \citep{tibshirani_estimating_2001}.
\label{fig:2D_Example_Traditional}
}
\end{figure}

We also considered two commonly used criteria for selecting the number of clusters based on the gap statistic \citep{tibshirani_estimating_2001} which compares the sum of within-cluster quadratic distances for the observed data with that obtained from $N_R$ unstructured reference datasets. Gap (I) refers to the original selection rule proposed by \citet{tibshirani_estimating_2001}, in which $\hat{k}_I$ is the smallest value of $k$ satisfying 
$
\mathrm{Gap}(k) \;\ge\; \mathrm{Gap}(k+1) - s_{k+1},
$
where $s_{k}$ is the standard error estimate defined in the original work.  Gap (II) denotes an alternative heuristic criterion in which $\hat{k}_{II}$ is chosen as the maximizer of $\mathrm{Gap}(k)$ over the considered range of cluster numbers \citep{Sugar2003, Yan2007}.  Using $N_R = 200$ bounding-box uniform reference datasets, we obtained $\hat{k}_I=1$ and $\hat{k}_{II}=4$ (Fig.~\ref{fig:2D_Example_Traditional}(e)). Using  PCA-aligned uniform reference datasets yields a qualitatively similar picture with $\hat{k}_I=1$ and $\hat{k}_{II}=3$ (Supplementary Fig.~1). 

Fig.~\ref{fig:2D_Example_ElbowSig}(a) displays the dependence of the within-cluster squared distance heterogeneity (inertia) on $k$ for the dataset shown in Fig.~\ref{fig:2D_Example_Traditional}(a). A visual examination might suggest a mild elbow at $k=2$, but no particularly pronounced elbow is evident for this example. In particular, the curve does not exhibit a visually clear inflection at $k=3$, which corresponds to the true number of generating components.

Overall, among the tested traditional cluster-number estimators, only the Gap (II) method with PCA-aligned uniform reference datasets recover the true number of clusters ($M=3$) in this example. The limited separation between clusters~1 and~2 (see Fig.~\ref{fig:2D_Example_Traditional}(a)) likely contributes to the inability of most of the traditional methods studied here to identify the true number of components. However, the precise reasons why different methods fail in different ways are multifaceted and depend on how each criterion balances within-cluster compactness, between-cluster separation, and sensitivity to noise or scale. A detailed analysis of these failure modes falls outside the scope of the present example, whose primary purpose is to illustrate the challenges encountered by conventional cluster-number selection techniques in multi-scale settings.

\begin{figure}[!t]%
\centering
\includegraphics[scale=0.6]{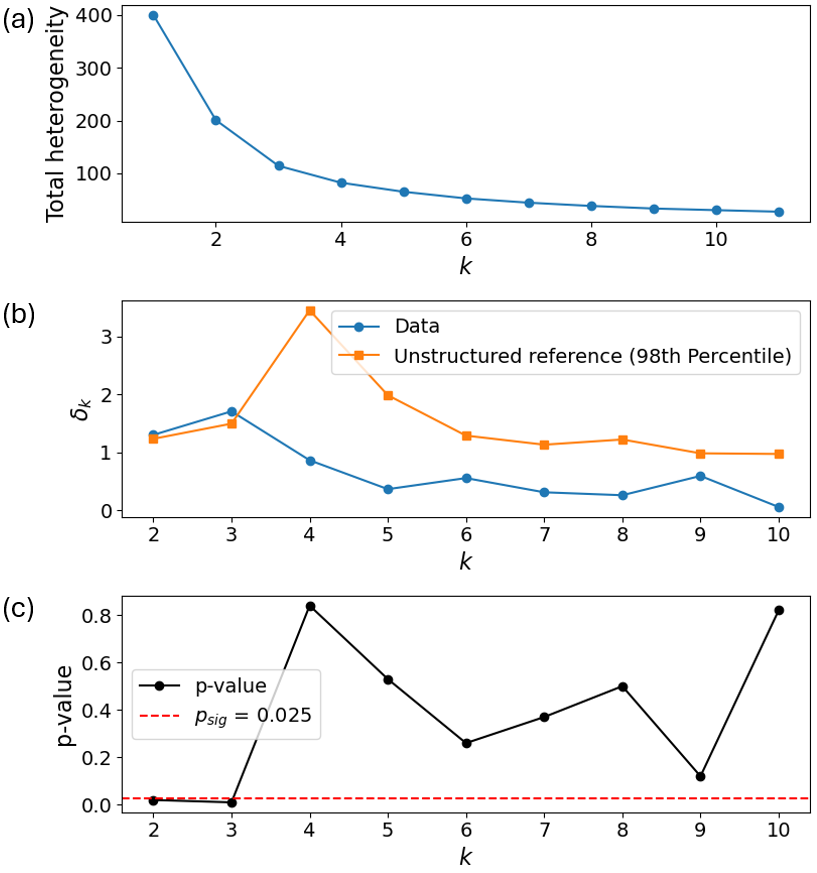}
\caption{Application of ElbowSig to the two-dimensional dataset shown in Fig.~\ref{fig:2D_Example_Traditional}.
(a) Total within-cluster heterogeneity (inertia) as a function of $k$.
(b) Observed elbow statistic (circles) together with the $100 (1 - p_{\text{sig}})$ percentile of the $\delta_k$ distribution obtained from unstructured reference datasets (squares) at level $q_1=0.05$.
(c) Empirical p-values $p_k$ comparing the observed elbow statistic at each $k$ with its reference distribution. The horizontal dashed line indicates the per-scale significance threshold $p_{\text{sig}}$.
\label{fig:2D_Example_ElbowSig}
}
\end{figure}

The elbow statistic $\delta_k^{\text{data}}$ exhibits several local maxima over the explored range of $k$ values (Fig.~\ref{fig:2D_Example_ElbowSig}(b)), which could indicate clustering transitions. However, comparing $\delta_k^{\text{data}}$ with the elbow statistics obtained from $N_R = 200$ uniformly distributed bounding-box reference datasets reveals that only $k=2$ and $k=3$ correspond to partitions that are statistically distinguishable from the reference data when using per-scale control at level $q_1 = 0.05$. Specifically, only for $k=2$ and $k=3$ do the empirical $p$-values satisfy $p_k < p_{\mathrm{sig}}$ (see Fig.~\ref{fig:2D_Example_ElbowSig}(c)). Equivalently, $\delta_k^{\text{data}}$ lies above the $100 (1 - p_{\mathrm{sig}})$-percentile of the reference $\delta_k$ distribution only for these two values of $k$ (see Fig.~\ref{fig:2D_Example_ElbowSig}(b)).

The fact that both $k=2$ and $k=3$ pass the per-scale significance criterion indicates evidence for multi-scale structure in the dataset. In particular, the partition into two clusters captures the dominant large-scale separation, whereas the partition into three clusters reflects an additional, finer-scale subdivision—consistent with the underlying generative model in which one of the large clusters contains two partially overlapping subclusters (clusters~1 and~2 in Fig.~\ref{fig:2D_Example_Traditional}(a)). Rather than forcing a single ``optimal'' value of $k$, ElbowSig identifies resolutions at which statistically meaningful structure is present, thereby providing a more faithful description of the multi-scale organization implicit in the data. 

When ElbowSig is applied using PCA-aligned reference datasets, only the partition with $k = 3$ is found to differ significantly from random data at the per-scale control level $q_1 = 0.05$ (see Supplementary Fig.~2).

When applying global FDR control to the collection of ElbowSig $p$-values $\{p_k\}_{k=1}^{k_{\text{max}}}$, no partition is deemed statistically significant at the level $q_2 = 0.05$ in this example, regardless of whether the reference datasets are drawn from a bounding-box uniform distribution or from a PCA-aligned uniform distribution.


\begin{table}
\centering
\caption{Results for synthetic clustered datasets with $N=200$ observations across dimensions $D \in \{2,5,20\}$, 
with $M$ Gaussian components randomly centred in $[-10,10]^D$ and within-cluster standard deviation $\sigma_c$ as indicated. 
Each entry reports how many of the 100 simulated datasets led a method to select the corresponding number of clusters $k$. Results for the ElbowSig and Gap methods are based on 200 PCA-aligned uniform reference datasets. ElbowSig results are shown for both the individual scale (per-$k$) and global (FDR) significance criteria.  
Gap (I), Gap (II), Calinski--Harabasz (CH), Davies--Bouldin (DB), and the Silhouette index are included for comparison.  
Dashes at $k=1$ for all methods except the gap-based ones indicate that those 
methods cannot return $k=1$ by definition.\label{Table_blobs}}
\small
\begin{tabular}{ccc l *{10}{c}}
\hline
\multirow{2}{*}{$D$} & \multirow{2}{*}{$M$} 
  & \multirow{2}{*}{$\sigma_c$} & \multirow{2}{*}{Method} 
  & \multicolumn{10}{c}{Number of clusters, $k$}\\
\cline{5-14}
 & & & & 1 & 2 & 3 & 4 & 5 & 6 & 7 & 8 & 9 & 10 \\
\hline

\multirow{7}{*}{2} & \multirow{7}{*}{3} & \multirow{7}{*}{1}
& ElbowSig (per-k)  & -- & 28 & 82 & 0 & 3 & 13 & 0 & 1 & 5 & 2  \\
& & & ElbowSig (FDR)  & --   & 25 & 80 & 0 & 1 & 11 & 0 & 1 & 5 & 2 \\
& & & Gap (I)           & 6 & 12 & 81 & 1 & 0 & 0 & 0 & 0 & 0 & 0 \\
& & & Gap (II)          & 0 & 8 & 76 & 1 & 2 & 5 & 0 & 1 & 3 & 4 \\
& & & CH                & -- & 16 & 70 & 0 & 1 & 3 & 0 & 1 & 4 & 5 \\
& & & DB              &  -- & 2 & 1 & 0 & 12 & 30 & 12 & 13 & 14 & 16 \\
& & & Silhouette    &  -- & 35 & 64 & 0 & 0 & 0 & 0 & 0 & 0 & 1 \\

\hline

\multirow{7}{*}{5} & \multirow{7}{*}{3} & \multirow{7}{*}{1}
& ElbowSig (per-k) & -- & 2 & 99 & 0 & 0 & 14 & 2 & 1 & 9 & 4 \\
& & & ElbowSig (FDR)  & --   & 2 & 99 & 0 & 0 & 12 & 1 & 0 & 7 & 3 \\
& & & Gap (I)           & 0 & 0 & 99 & 0 & 0 & 1 & 0 & 0 & 0 & 0 \\
& & & Gap (II)          & 0 & 0 & 66 & 0 & 0 & 15 & 4 & 0 & 6 & 10 \\
& & & CH              &  -- & 1 & 99 & 0 & 0 & 0 & 0 & 0 & 0 & 0 \\
& & & DB              &  -- & 0 & 0 & 0 & 0 & 32 & 7 & 9 & 27 & 25 \\
& & & Silhouette     &  -- & 7 & 93 & 0 & 0 & 0 & 0 & 0 & 0 & 0 \\

\hline

\multirow{7}{*}{5} & \multirow{7}{*}{3} & \multirow{7}{*}{4}
& ElbowSig (per-k) & -- & 13 & 77 & 1 & 0 & 2 & 5 & 4 & 1 & 1 \\
& & & ElbowSig (FDR)  & --   & 7 & 69 & 1 & 0 & 1 & 4 & 2 & 0 & 1 \\
& & & Gap (I)           & 14 & 17 & 69 & 0 & 0 & 0 & 0 & 0 & 0 & 0 \\
& & & Gap (II)          & 2 & 9 & 81 & 6 & 0 & 0 & 2 & 0 & 0 & 0 \\
& & & CH              &  -- & 50 & 50 & 0 & 0 & 0 & 0 & 0 & 0 & 0 \\
& & & DB              &  -- & 4 & 3 & 8 & 12 & 33 & 22 & 5 & 9 & 4 \\
& & & Silhouette     &  -- & 57 & 43 & 0 & 0 & 0 & 0 & 0 & 0 & 0 \\

\hline

\multirow{7}{*}{20} & \multirow{7}{*}{5} & \multirow{7}{*}{1}
& ElbowSig (per-k) & -- & 0 & 0 & 1 & 100 & 0 & 0 & 0 & 2 & 5 \\
& & & ElbowSig (FDR)   & -- & 0 & 0 & 1 & 100 & 0 & 0 & 0 & 1 & 5 \\
& & & Gap (I)           & 0 & 0 & 0 & 0 & 100 & 0 & 0 & 0 & 0 & 0 \\
& & & Gap (II)          & 0 & 0 & 0 & 0 & 97 & 1 & 0 & 0 & 1 & 1 \\
& & & CH               & -- & 0 & 0 & 0 & 100 & 0 & 0 & 0 & 0 & 0 \\
& & & DB               & -- & 0 & 0 & 0 & 0 & 0 & 0 & 1 & 16 & 83 \\
& & & Silhouette     & --  & 0 & 0 & 2 & 98 & 0 & 0 & 0 & 0 & 0 \\
\hline
\end{tabular}
\end{table}

We examined various combinations of the dimension $D$, number of clusters $M$, and cluster spread $\sigma_c$. Table~\ref{Table_blobs} summarizes the results obtained from 100 datasets per setting, using 200 reference datasets for both ElbowSig and the gap-statistic methods.  Agglomerative clustering with Ward's minimum variance linkage and Euclidean metric was used to partition the datasets. ElbowSig consistently identifies the expected number of clusters prescribed by the data-generating model (the value of $M$), while simultaneously revealing additional structure at coarser resolutions: for many datasets, statistically significant clustering is detected for some $k<M$, reflecting cases in which certain Gaussian components overlap and thus form meaningful super-clusters. 

In the results shown in Table \ref{Table_blobs}, cluster overlap is most pronounced for the two-dimensional datasets and for the five-dimensional case with $\sigma_c=4$. In these settings, ElbowSig identifies the true resolution $k=M$ in a smaller fraction of simulations, because the overlap reduces the statistical evidence for a partition that exactly matches the generative components. At the same time, larger overlap increases the number of simulations in which partitions with $k<M$ are declared significant. This reflects the presence of merged ``super-clusters'' in the data rather than a complete absence of structure. As expected, these scenarios also expose the limitations of traditional clustering scores in recovering the true number of components when they overlap.

Statistically significant structure is also sometimes detected by ElbowSig at finer resolutions ($k>M$). In particular, sub-clusters at $k=6$ appear relatively frequently, corresponding to subdivisions of the nominal clusters into two parts; traditional clustering scores likewise often identify $k=6$ as the optimal number clusters. Although such partitions do not represent additional mixture components in the generative model, they indicate detectable within-cluster heterogeneity relative to the null reference and illustrate the multiscale sensitivity of the procedure.

The ability of ElbowSig to recover the true number of mixture components, $M$, is typically superior to that of the CH, DB, silhouette, and gap-statistic methods. Although these traditional criteria may occasionally select a value $\hat{k}$ that coincides with one of the statistically significant values of $k$ identified by ElbowSig, they do not provide any associated measure of statistical confidence. As a result, interpreting the cluster numbers suggested by these scores alone is inherently challenging, particularly in settings where multiple scales of structure are present.

In Table~\ref{Table_methods} we further examine how the ElbowSig results depend on the underlying clustering algorithm, on the construction of the reference datasets, and on whether multiple-testing correction is performed. For agglomerative clustering (Ward’s linkage), $k$-means (initialized with a single random start), FCM (with fuzzy parameter $m=2$ and initialized with a single random start), and GMM, ElbowSig consistently identifies $k=3$ as statistically significant under per-scale testing, in accordance with the true number of mixture components, $M=3$.

Table~\ref{Table_methods} also highlights important differences across clustering methods and reference distributions. When bounding-box uniform reference data are used, agglomerative clustering and $k$-means typically yield strong evidence for the true $k$, whereas FCM tends to declare additional clusters significant. This is likely associated with the fact that the baseline elbow statistic for FCM converges to a positive value, and the fluctuations of the baseline elbow statistic for FCM are smaller than those of the other methods (see Fig.~\ref{fig:Stats_vs_D}(b)). The elbow statistic for the observed data seem to exceed the essentially constant baseline whenever there are local deviations from a truly uniform distribution. In principle, this could reflect the presence of finer-scale structure in the data. However, for relatively small sample sizes, it could also be an artifact of the inability to generate a truly unstructured reference dataset.

The results in Table~\ref{Table_methods} reveal that PCA-aligned reference datasets substantially reduce the number of both fine-scale ($k>M$) and super-cluster ($k<M$) partitions declared significant for all clustering methods except GMM.  
This is a manifestation of the relative nature of cluster structure: The declaration of a pattern as a statistically significant cluster depends on the clustering method and the choice of reference distribution, which defines what is considered ``unstructured'' data.

Overall, the results indicate that ElbowSig reliably recovers the generative number of components across a range of common clustering algorithms, while simultaneously revealing how methodological choices influence the detection of additional multi-scale structure.


\begin{table}[t]
\centering
\small
\caption{Effect of clustering algorithm, reference dataset construction, 
and multiple-testing correction on the ElbowSig outcomes. 
Shown are results from 100 datasets generated from $M=3$ Gaussian components with $\sigma_c=1$ in $D=5$ dimensions. Each realization consists of $N=200$ observations using randomly located centres of the Gaussian components in $[-10,10]^5$. ``BBU'' indicates bounding-box uniform reference datasets, 
and ``PCA'' denotes PCA-aligned uniform reference datasets. The numbers in each 
cell give how many datasets led a method to select the corresponding number 
of clusters $k$.}
\begin{tabular}{l l l *{9}{c}}
\hline
\multirow{2}{*}{Clust. method} & \multirow{2}{*}{Reference} & \multirow{2}{*}{Control} & \multicolumn{9}{c}{Number of clusters, $k$} \\
\cline{4-12}
 & & & 2 & 3 & 4 & 5 & 6 & 7 & 8 & 9 & 10 \\
\hline

\multirow{4}{*}{Agglomerative} 
  & \multirow{2}{*}{BBU} 
    & per-$k$ & 49 & 100 & 2 & 4 & 43 & 9 & 1 & 10 & 6 \\
  &         & FDR     & 47 & 100 & 2 & 3 & 41 & 7 & 1 & 8  & 5 \\
  & \multirow{2}{*}{PCA} 
    & per-$k$ & 2 & 99 & 0 & 0 & 14 & 2 & 1 & 9 & 4 \\
  &         & FDR     & 2 & 99 & 0 & 0 & 12 & 1 & 0 & 7 & 3 \\

\hline

\multirow{4}{*}{K-means} 
  & \multirow{2}{*}{BBU} 
    & per-$k$ & 38 & 99 & 14 & 9 & 25 & 5 & 7 & 1 & 3 \\
  &         & FDR     & 37 & 99 & 11 & 6 & 16 & 1 & 5 & 1 & 3 \\
  & \multirow{2}{*}{PCA} 
    & per-$k$ & 2 & 99 & 6 & 3 & 22 & 2 & 9 & 1 & 3 \\
  &         & FDR     & 2 & 99 & 5 & 2 & 14 & 1 & 7 & 0 & 2 \\

\hline

\multirow{4}{*}{Fuzzy c-means ($m=2$)}
  & \multirow{2}{*}{BBU} 
    & per-$k$ & 31 & 100 & 23 & 41 & 55 & 37 & 55 & 47 & 52 \\
  &         & FDR     & 31 & 100 & 23 & 41 & 55 & 37 & 55 & 47 & 52 \\
  & \multirow{2}{*}{PCA} 
    & per-$k$ & 6 & 100 & 20 & 38 & 52 & 36 & 53 & 43 & 48 \\
  &         & FDR     & 6 & 100 & 20 & 37 & 52 & 36 & 53 & 43 & 48 \\

\hline

\multirow{4}{*}{GMM} 
  & \multirow{2}{*}{BBU} 
    & per-$k$ & 7 & 84 & 1 & 0 & 0 & 4 & 0 & 3 & 0 \\
  &         & FDR     & 1 & 49 & 0 & 0 & 0 & 4 & 0 & 2 & 0 \\
  & \multirow{2}{*}{PCA} 
    & per-$k$ & 20 & 95 & 1 & 0 & 0 & 6 & 1 & 3 & 2 \\
  &         & FDR     & 10 & 70 & 1 & 0 & 0 & 4 & 1 & 2 & 1 \\

\hline
\end{tabular}
\label{Table_methods}
\end{table}

\section{Application to random unstructured synthetic data}
\label{sec:UnstructuredData}

To evaluate the behaviour of ElbowSig in situations where no genuine clustering structure is present, we applied the method to purely unstructured data in dimensions $D=2$ and $D=20$. For each dimension, we generated $100$ independent datasets, consisting either of points drawn uniformly from $[0,1]^D$ or i.i.d.\ Gaussian samples with variance $\sigma_c=1$ in each coordinate. In all cases, ElbowSig was run with agglomerative clustering and $k\leq 10$. For the Gaussian datasets, we present results corresponding to both bounding-box and PCA-aligned reference models. Uniformly distributed datasets are less sensitive to the specific null reference and we only present the results obtained with bounding-box reference datasets.

\begin{table}
\centering
\caption{Results for unstructured data in dimensions $D \in \{2,20\}$ generated from either a Uniform distribution on $[0,1]^D$ or a Gaussian distribution with variance $\sigma_c=1$ in each dimension. For each configuration, 100 independent datasets were simulated. Each entry reports how many of the 100 datasets led a method to select the indicated number of clusters. For ElbowSig, the value displayed in parentheses in the $k=1$ column denotes the number of datasets for which no statistically significant clustering was detected at any tested value $k>1$. Results are shown for both the individual scale (per-$k$) and global (FDR) significance criteria. Gap (I) and Gap (II) correspond to the two gap-statistic selection rules described in the main text. For SigClust, only the number of cases in which the null hypothesis of a single Gaussian component was not rejected is shown. \label{Table_unstructured}}

\small
\begin{tabular}{c l l l *{10}{c}}
\hline
\multirow{2}{*}{D} & \multirow{2}{*}{Data} & \multirow{2}{*}{Method} & \multirow{2}{*}{Reference}
  & \multicolumn{10}{c}{Number of clusters, $k$} \\
\cline{5-14}
 & & &  & 1 & 2 & 3 & 4 & 5 & 6 & 7 & 8 & 9 & 10 \\
\hline

\multirow{5}{*}{2} & \multirow{5}{*}{Uniform}
 & ElbowSig (per-k) & BBU & (78) & 2 & 5 & 4 & 4 & 0 & 1 & 1 & 1 & 4 \\
 & & ElbowSig (FDR) & BBU & (97) & 1 & 0 & 0 & 0 & 0 & 0 & 0 & 0 & 2 \\
 & & Gap (I)          & BBU & 99   & 1 & 0 & 0 & 0 & 0 & 0 & 0 & 0 & 0 \\
 & & Gap (II)         & BBU & 11   & 3 & 14 & 19 & 14 & 6 & 9 & 8 & 3 & 13 \\
 \cline{3-14}
 & & SigClust         & Gaussian & 27   & - & - & - & - & - & - & - & - & - \\ 
\hline

\multirow{9}{*}{2} & \multirow{9}{*}{Gaussian}
 & ElbowSig (per-k) & BBU & (76) & 5 & 9 & 0 & 1 & 3 & 1 & 4 & 0 & 1 \\
& & ElbowSig (FDR)  & BBU & (90) & 4 & 4 & 0 & 0 & 1 & 1 & 0 & 0 & 0 \\
& & Gap (I)            & BBU & 100   & 0 & 0 & 0 & 0 & 0 & 0 & 0 & 0 & 0 \\
& & Gap (II)           & BBU & 100    & 0 & 0 & 0 & 0 & 0 & 0 & 0 & 0 & 0 \\
\cline{3-14}
& & ElbowSig (per-k) & PCA & (82) & 1 & 8 & 0 & 1 & 2 & 2 & 3 & 0 & 1 \\
&& ElbowSig (FDR)  	& PCA	& (94) & 0 & 3 & 0 & 0 & 0 & 2 & 1 & 0 & 0 \\
& & Gap (I)            & PCA &  100  & 0 & 0 & 0 & 0 & 0 & 0 & 0 & 0 & 0 \\
& & Gap (II)           & PCA &  100  & 0 & 0 & 0 & 0 & 0 & 0 & 0 & 0 & 0 \\
\cline{3-14}
 & & SigClust         & Gaussian & 98   & - & - & - & - & - & - & - & - & - \\ 
\hline

\multirow{5}{*}{20} & \multirow{5}{*}{Uniform}
 & ElbowSig (per-k) & BBU & (72) & 4 & 3 & 1 & 5 & 5 & 3 & 3 & 4 & 0 \\
& & ElbowSig (FDR)  & BBU & (91) & 1 & 2 & 0 & 1 & 1 & 1 & 2 & 1 & 0 \\
& & Gap (I)            & BBU & 100  & 0 & 0 & 0 & 0 & 0 & 0 & 0 & 0 & 0 \\
& & Gap (II)           & BBU & 45   & 9 & 9 & 4 & 3 & 3 & 3 & 3 & 3 & 18 \\
\cline{3-14}
 & & SigClust         & Gaussian & 91   & - & - & - & - & - & - & - & - & - \\ 
\hline

\multirow{9}{*}{20} & \multirow{9}{*}{Gaussian}
 & ElbowSig (per-k) & BBU & (75) & 4 & 3 & 2 & 3 & 2 & 4 & 2 & 2 & 3 \\
& & ElbowSig (FDR)   & BBU & (91) & 1 & 0 & 0 & 0 & 1 & 4 & 2 & 0 & 1 \\
& & Gap (I)          & BBU & 100  & 0 & 0 & 0 & 0 & 0 & 0 & 0 & 0 & 0 \\
& & Gap (II)         & BBU & 90   & 4 & 2 & 0 & 1 & 1 & 0 & 0 & 1 & 1 \\
\cline{3-14}
& & ElbowSig (per-k) & PCA & (94) & 0 & 0 & 0 & 2 & 0 & 2 & 0 & 0 & 2 \\
& & ElbowSig (FDR)  & PCA & (100) & 0 & 0 & 0 & 0 & 0 & 0 & 0 & 0 & 0 \\
& & Gap (I)            & PCA & 100 & 0 & 0 & 0 & 0 & 0 & 0 & 0 & 0 & 0 \\
& & Gap (II)           & PCA & 100 & 0 & 0 & 0 & 0 & 0 & 0 & 0 & 0 & 0 \\
\cline{3-14}
 & & SigClust         & Gaussian & 92   & - & - & - & - & - & - & - & - & - \\ 
\hline

\end{tabular}
\end{table}

Table~\ref{Table_unstructured} reports, for each configuration, how often the method identified statistically significant structure under per-scale control at level $q_1=0.05$ or under global FDR control at $q_2=0.05$. The majority of datasets are correctly identified as unstructured, although occasional detections at $k>1$ still arise due to random fluctuations. Indeed, even under the null hypothesis, some slope changes in $H_k$ are statistically expected to exceed the per-scale significance threshold at the nominal rate. Applying global FDR control substantially reduces these detections, again in accordance with the theoretical foundation of the procedure.

The results also reveal systematic differences across reference constructions for Gaussian unstructured data. Bounding-box reference datasets tend to yield more detections of $k>1$ than PCA-aligned references, reflecting again the fact that ``statistical significance'' is always relative to the null model used. In both cases, however, false positives are correctly kept to modest levels.

Gap~(I) performs especially well in this setting, consistently returning $\hat{k}=1$ for almost all unstructured datasets. This behavior is a genuine strength of that particular rule. Nevertheless, Table~\ref{Table_blobs} showed that Gap~(I) performs far less favorably when genuine, possibly multi-scale or overlapping cluster structure is present. Gap~(II), on the other hand, often selects larger $k$ in unstructured data.

For comparison, Table~\ref{Table_unstructured} also reports results from SigClust, which tests the null hypothesis that the data arise from a single Gaussian component \citep{Liu2008,Shen2024}. SigClust is particularly effective at identifying the absence of clustering structure in Gaussian unstructured data. In contrast, it is less effective for uniformly distributed data, which deviate more strongly from the Gaussian null model. The performance of ElbowSig is comparable to that of SigClust for Gaussian data, but outperforms SigClust for uniformly distributed data. Indeed, ElbowSig is expected to be more flexible than SigClust in generic settings where the background structure is not well represented by a single Gaussian model.

\section{Applications to real data}
\label{sec:RealData}

We next evaluated the performance of ElbowSig on five real datasets spanning a diverse range of dimensionalities, sample sizes, and degrees of a priori class separation (Table~\ref{tab:realdata}).

\begin{table}
\centering
\small
\caption{ElbowSig results for real datasets using several clustering algorithms and reference constructions.}
\label{tab:realdata}
\begin{tabular}{lcccc l l l l}
\hline
Data & $N$ & $D$ & Classes & Clustering & Reference & $k$ (per-$k$) & $k$ (FDR)\\
\hline

\multirow{6}{*}{Iris} & \multirow{6}{*}{150} & \multirow{6}{*}{4} & \multirow{6}{*}{3}
  & \multirow{2}{*}{Agglom.} & BBU & 2,3,5 & 2,3,5\\
  & & & & & PCA & 3 & --\\
  & & & & \multirow{2}{*}{k-means} & BBU & 2,3,5 & 2,3,5\\
  & & & & & PCA & 3 & 3\\
  & & & & \multirow{2}{*}{GMM} & BBU & 2,3,5,6,8 & 2,3,5,6,8\\
  & & & & & PCA & 2 & 2\\

\hline

\multirow{6}{*}{Breast cancer} & \multirow{6}{*}{569} & \multirow{6}{*}{30} & \multirow{6}{*}{2}
  & \multirow{2}{*}{Agglom.} & BBU & 2,3,9 & 2,3,9\\
  & & & & & PCA & 2,3 & 3\\
  & & & & \multirow{2}{*}{k-means} & BBU & 2,3 & 2,3\\
  & & & & & PCA & 2,3 & 2,3\\
  & & & & \multirow{2}{*}{GMM} & BBU & 2 & 2\\
  & & & & & PCA & 2 & 2\\

\hline

\multirow{6}{*}{Campylobacter host} & \multirow{6}{*}{673} & \multirow{6}{*}{50} & \multirow{6}{*}{5}
  & \multirow{2}{*}{Agglom.} & BBU & 2--4,6,7 & 2--4,6,7\\
  & & & & & PCA & 6 & --\\
  & & & & \multirow{2}{*}{k-means} & BBU & 2,4,6 & 2,4,6\\
  & & & & & PCA & 6 & 6\\
  & & & & \multirow{2}{*}{GMM} & BBU & 6 & 6\\
  & & & & & PCA & 5,6 & 5,6\\

\hline

\multirow{6}{*}{Human populations} & \multirow{6}{*}{700} & \multirow{6}{*}{258} & \multirow{6}{*}{5}
  & \multirow{2}{*}{Agglom.} & BBU & 2 & 2\\
  & & & & & PCA & -- & --\\
  & & & & \multirow{2}{*}{k-means} & BBU & 4,9 & 4,9\\
  & & & & & PCA & 4,9 & 4,9\\
  & & & & \multirow{2}{*}{GMM} & BBU & 2,4--6,9 & 2,4--6,9\\
  & & & & & PCA & 2,4--6,9 & 2,4--6,9\\

\hline

\multirow{6}{*}{Insulin resistance} & \multirow{6}{*}{302} & \multirow{6}{*}{94} & \multirow{6}{*}{--}
  & \multirow{2}{*}{Agglom.} & BBU & 2,4,6 & 2,4,6\\
  & & & & & PCA & -- & --\\
  & & & & \multirow{2}{*}{k-means} & BBU & 2,4,6 & 2,4,6\\
  & & & & & PCA & 6 & 6\\
  & & & & \multirow{2}{*}{GMM} & BBU & 2 & 2\\
  & & & & & PCA & 2,6 & 2,6\\

\hline
\end{tabular}
\end{table}

The first two datasets are classical benchmarks for clustering, while the remaining three arise from contemporary biological applications:
\begin{itemize}
\item The \emph{Iris} dataset consists of four measurements for \(N = 150\) flowers from three species (\emph{I. setosa}, \emph{I. versicolor} and \emph{I. virginica}) \citep{FISHER1936}.
\item The Wisconsin breast cancer dataset contains \(N = 569\) samples described by diagnostic measurements and is labelled into two classes (benign versus malignant) \citep{Wolberg1990}.
\item The \emph{Campylobacter} dataset comprises genotypes from 673 isolates collected from five host species (chicken, cattle, sheep, pig, and wild bird). Each isolate is characterized using a panel of 50 highly discriminative SNPs selected according to the S1 strategy described in \cite{perez-reche_mining_2020}.
\item The human populations dataset consists of microsatellite measurements from seven geographically separated populations \citep{pemberton_population_2013}: Africa, America, Central/South Asia, East Asia, Europe, the Middle East, and Oceania. We analyzed a reduced subset of \(N = 700\) individuals (100 per population), retaining the 20\% of loci with the highest coefficient of variation, resulting in \(D = 258\) loci.
\item Finally, we considered an effect size profile (ESP) dataset describing associations between insulin resistance levels and 94 serum polar metabolites in non-diabetic individuals from the Danish MetaHIT cohort \citep{Lynch2016, Pedersen2018}. The ESP data were generated using the ESPClust software \citep{Prez-Reche2025} and consist of \(N = 302\) vectors of dimension \(D = 94\), each corresponding to a window defined over BMI, sex, and gut microbiome gene richness and summarizing associations derived from 275 individuals.
\end{itemize}

Across all datasets, ElbowSig was run with agglomerative clustering, $k$-means, and Gaussian mixture models, using both bounding–box uniform and PCA-aligned uniform reference distributions. In line with the tendency observed in Table~\ref{Table_methods} for FCM to declare sub-cluster patterns as significant, we do not report results for this method in the real-data applications, as this behavior may increase the risk of false positives and complicate interpretation.

Under per-scale significance control, ElbowSig frequently identified multiple values of $k$ that were statistically distinguishable from the corresponding reference structure, indicating the presence of clustering structure at more than one resolution.

Several datasets exhibit clear multiscale behavior. For the Iris dataset, significant resolutions consistently include $k=3$, corresponding to the known species structure, as well as coarser ($k=2$) and finer ($k \geq 5$) partitions. The prominence of $k=2$ reflects the well-known partial overlap between \emph{I. versicolor} and \emph{I. virginica}~\citep{Bezdek1974}, while the finer partitions indicate statistical substructure relative to a null reference.
Similarly, the \emph{Campylobacter} host dataset shows significant structure across a range of resolutions, with coarse splits ($k=2$–$4$) reflecting the overlap between some of the host reservoirs (mainly cattle and sheep~\citep{perez-reche_mining_2020}), and finer resolutions ($k \geq 6$) potentially indicating more subtle genotype differentiation.
The human populations dataset likewise displays multiscale structure, with significant $k$ extending beyond the five nominal populations.
Overall, the presence of statistically significant substructure does not necessarily imply biologically meaningful subgroups; assessing their relevance would require further investigation using, e.g., larger datasets.

In contrast, the breast cancer dataset generally favors a smaller and more stable set of significant resolutions, typically centered around $k=2$ or $k=3$. This behavior aligns with the expectation of an approximately binary separation between benign and malignant samples. The insulin resistance dataset exhibits intermediate behavior: while no ground-truth class labels are available, ElbowSig identifies multiple statistically significant resolutions under per-scale control, suggesting the presence of structure at more than one level across different covariate-defined windows. Notably, a coarse partition into $k=2$ clusters used by \citet{Prez-Reche2025} is identified here as statistically significant across several clustering methods.

As expected, the application of global FDR control yields a more parsimonious set of significant resolutions. However, the discrepancies between the per-scale and FDR-adjusted results remain marginal across the studied datasets. Notably, even under the more stringent FDR criterion, the majority of datasets retain at least one statistically significant clustering scale, suggesting that the identified structures are robust to multiple-testing adjustments. 

Table~\ref{tab:realdata} also reveals a high degree of qualitative agreement across clustering algorithms, particularly for dominant resolutions, while differences across baseline reference constructions are more pronounced. PCA-aligned reference distributions tend to yield more conservative decisions than bounding–box uniform references. A similar trend was observed for synthetic data, suggesting that PCA-aligned reference distributions impose a more stringent null.

\section{Conclusion}
\label{sec:Conclusion}

In this work, we introduced ElbowSig, a statistically principled framework that formalizes the widely used elbow heuristic for determining the number of clusters. By interpreting the elbow as a discrete curvature feature of the heterogeneity curve $H_k$, ElbowSig provides a rigorous inferential framework for a traditionally heuristic procedure.

Our results highlight three primary insights. First, ElbowSig reliably detects meaningful clustering structure in both synthetic and real datasets. Second, we demonstrate that many empirical datasets exhibit statistically significant structure at multiple scales, challenging the conventional search for a single ``optimal'' $k$ and highlighting the importance of multiscale data characterization. Third, the framework provides a transparent mechanism for balancing sensitivity to fine-scale structure against robustness to stochastic fluctuations via the choice of reference model and multiplicity control.

Controlled experiments on synthetic data confirm that the elbow statistic adheres to the derived asymptotic null distributions, while gaining power as cluster separation increases. These results further show that ElbowSig maintains reliable error control under both local and global significance criteria. 

A key feature of ElbowSig is its algorithm-agnostic design: it requires only the sequence of heterogeneity values $H_k$, making it compatible with any clustering method for which a suitable heterogeneity function can be defined. This flexibility allows users to tailor the clustering algorithm and reference construction to the specific characteristics of their data and research questions, while still benefiting from a rigorous statistical framework for inference.

\appendix
\section*{Appendix A - Subsampling calibration of $p_{\mathrm{thr}}(k;q_1)$}
\label{app:psig}

For each scale $k=1,\dots,k_{\text{max}}$ and for each repetition $s = 1,\dots,S_{\mathrm{sig}}$:
\begin{enumerate}
\item select at random a fraction $f_{\mathrm{sel}}$ of the $N_R$
reference realizations (yielding $N_{\mathrm{sel}}$ ``test'' realizations);
\item for each selected realization with elbow statistic $\delta_k^{(i)}$, compute the
null leave-one-out p-value as
\[
p_k^{(i)} = \frac{1}{N_R - 1}
\sum_{r \ne i}
\mathbbm{1}\!\left(\delta_k^{(r)} \ge \delta_k^{(i)}\right);
\]
\item define the repetition-specific threshold for scale $k$ as the $q_1$-quantile of the set $\{p_k^{(i)}: i \in \text{ selected test set} \}$
\[
p_k^{(s)}(q_1) = 
\mathrm{Quantile}_{q_1}
\left(
\{p^{(i)}_k : i \in \text{selected test set}\}
\right).
\]
\end{enumerate}

The local threshold at scale $k$ is then
\[
p_{\mathrm{thr}}(k;q_1)
=
\min_{1 \le s \le S_{\mathrm{sig}}}
\;p_k^{(s)}(q_1).
\]

\paragraph{Acknowledgements}
Funding support is acknowledged from the UKRI COVID-19 Longitudinal Health and
Wellbeing National Core Study, a Medical Research Council Fellowship (MR/W021455/1).

\paragraph{Supplementary material}
Supplementary figures are available online.

\paragraph{Data availability}
All the data used in the applications to real experiments are publicly available. The source code ElbowSig is available at https://github.com/fjpreche/ElbowSig.git .

%

\end{document}